\documentclass{article}
\pdfoutput=1 
\usepackage{arxiv}
\renewcommand{\undertitle}{Under consideration for publication in Information Sciences (Inf. Sci.)}
\renewcommand{\headeright}{Preprint}
\renewcommand{\shorttitle}{On the Triangle Inequality for the Jaccard Distance in Arbitrary Lattices} 

\usepackage{amssymb}
\usepackage{amsmath}
\usepackage{amsthm}

\newtheorem{definition}{Definition} 
\newtheorem{proposition}{Proposition} 
\newtheorem{theorem}{Theorem} 
\title{On the Triangle Inequality for the Jaccard Distance in Arbitrary Lattices}
\author{
  Costin B\u{a}dic\u{a} \\
  Department of Computers and Information Technology \\
  University of Craiova \\
  Craiova, Romania \\
  \texttt{costin.badica@edu.ucv.ro} 
  \And
  Amelia B\u{a}dic\u{a} \\
  Department of Business Informatics \\
  University of Craiova \\
  Craiova, Romania \\
  \texttt{amelia.badica@edu.ucv.ro}
}
\date{August 2026}

\begin{document}

\maketitle

\begin{abstract}
This paper presents new theoretical results on generalizing the Jaccard distance for lattices and real valuations. We demonstrate that when the valuation is strictly positive, monotone, and modular, the Jaccard distance satisfies the triangle inequality on arbitrary lattices, effectively generalizing earlier results that depended heavily on distributivity. Moving to relatively complemented distributive lattices (which safely drop the requirement for the global bounds found in Boolean algebras), we prove the triangle inequality holds as long as the valuation is positive, monotone, supermodular, and $\log$-submodular. Additionally, we adapt the symmetric-difference Jaccard formulation for submodular valuations to sectionally complemented distributive lattices. Shifting to necessary conditions, we prove that supermodularity is a strict requirement for the standard generalized Jaccard distance to operate as a valid metric. Finally, we map the practical value of relaxing these structural constraints to computational fields like quantum information theory, formal concept analysis, and machine learning, closing with a brief look at open mathematical problems.
\end{abstract}


\noindent\textbf{Keywords:} Generalized Jaccard distance; Arbitrary lattices; Triangle inequality; Valuation function; Modularity; Supermodularity; Submodularity


\section{Introduction}
\label{sec:intro}

Paul Jaccard originally introduced his similarity index in 1901 to quantify floral biodiversity \cite{Jaccard1901}. Today, it remains a fundamental tool across computer science and mathematics for evaluating the similarity of two finite sets as a real number in $[0,1]$. 
Given two finite sets $X$ and $Y$, their Jaccard similarity is formulated as:
\begin{equation}
\label{eq:jaccard-sim}
\begin{aligned}
J(X,Y) = \frac{|X \cap Y|}{|X \cup Y|}
\end{aligned} 
\end{equation}

Also known as the Tanimoto similarity \cite{Lipkus1999} in chemical information systems and as the Intersection over Union (IoU) metric \cite{Rezatofighi2019} in computer vision, this measure calculates the proportion of shared elements against the total unique elements across both sets. It is strictly normalized: it yields $0$ if the sets are entirely disjoint and $1$ if they are identical. 

Whenever a similarity measure $S$ is normalized to $[0,1]$, we can naturally define a corresponding distance measure as $d_S(X,Y) = 1 - S(X,Y)$. A well-established feature of the distance associated with the Jaccard similarity, $d_J$, is that it forms a true metric, successfully satisfying the triangle inequality \cite{Levandowsky1971}.

Any rigorous distance metric must satisfy the triangle inequality. Mathematically, it ensures that a direct path between two points is never longer than a path taken through an intermediate point. In practical computer science, this property is an absolute requirement for algorithmic correctness and efficiency. For example, the triangle inequality provides the strict bounding rules necessary to prune branches in exact nearest-neighbor similarity searches \cite{Schubert2021}, safely skip redundant distance calculations to accelerate $k$-means clustering \cite{Elkan2003}, and guarantee valid mathematical approximation bounds for NP-hard routing models like the metric traveling salesperson problem \cite{Karlin2021}.

Over the decades, researchers have verified the metricity of the classic Jaccard distance using various strategies. Levandowsky and Winter \cite{Levandowsky1971} provided an early proof using a seven-region decomposition on a Venn diagram for $A \cup B \cup C$. Shortly after, Gilbert \cite{Gilbert1972} simplified the process with a different algebraic grouping of the same decomposition. Later proofs introduced scalar products of bit vectors \cite{Lipkus1999} and arguments by contradiction \cite{Grygorian2018}. However, a recurring limitation across these classic proofs is their strict dependence on Boolean algebra axioms, specifically distributivity and the existence of set complements. These specific properties are simply not available in arbitrary lattices.

As the Jaccard index was adapted for more complex data over time, these structural limits prompted researchers to create numerous domain-specific variations. For example, the weighted Jaccard similarity handles real-valued vectors \cite{Chierichetti2010}, while various fuzzy metrics use $t$-norms and $t$-conorms to evaluate fuzzy sets \cite{DeBaets1999}. Similar extensions, alongside broader studies of normalized similarity measures, have been thoroughly developed for Boolean models \cite{Badica2025}, multisets, graphs, and probability spaces. While these variants work perfectly well within their specific niches, their metricity proofs typically lean entirely on the unique algebraic quirks of those exact data structures. 

Instead of adding another domain-specific formula to the pile, recent theoretical work focuses on unifying these applications. This is done by abstracting the data structures into lattice theory and swapping out specific size operators (like cardinality or vector norms) for an arbitrary real-valued valuation function $f$. Kosub \cite{Kosub2019} formalized this approach by defining the generalized Jaccard similarity as:
\begin{equation}
\label{eq:gen-jaccard-sim}
\begin{aligned}
J_f(X,Y) = \frac{f(X \cap Y)}{f(X \cup Y)}
\end{aligned} 
\end{equation}

When $f$ is positive and monotone, $J_f$ stays safely normalized in $[0,1]$. Kosub showed (Theorem 3 in \cite{Kosub2019}) that if $f$ is positive, monotone, and modular, the resulting generalized distance $d_{f,J}$ satisfies the triangle inequality. Although originally framed for sets, the proof easily scales to Boolean algebras. Yet, because it still relies explicitly on the distributivity axiom, it does not seamlessly transfer to arbitrary lattices.

Other research tracks have tried relaxing the modularity requirement of the valuation function itself. For example, generalized Consonni and Todeschini measures emerged by softening modularity into supermodularity and $\log$-submodularity, a topic that remains under active investigation for Boolean models \cite{Badica2025}. Kosub \cite{Kosub2019} also looked into submodular valuations using the symmetric-difference operator, which, again, is not natively defined in arbitrary lattices. Furthermore, recent studies have focused on axiomatic characterizations of the Jaccard metric, exploring quasi-metrics that emerge when strict constraints like the triangle inequality are relaxed \cite{Gerasimou2024}. 

Ultimately, most literature covering the generalized Jaccard distance leans heavily on the strict rules of Boolean algebras, particularly distributivity and global lattice bounds. Naturally, these constraints bottleneck the distance's usefulness in modern environments that lack these structures, such as quantum logic or open-ended data streams.

This paper systematically removes these structural roadblocks. By shifting our focus away from the constraints on the lattice and toward the inherent properties of the valuation function, we chart the exact mathematical boundaries where the generalized Jaccard distance remains a valid metric. Our primary contributions are:
\begin{itemize}
    \item[i)] \textbf{Eliminating the distributivity requirement:} We prove the generalized Jaccard distance holds up to the triangle inequality for modular valuations on arbitrary lattices. This shows that Gilbert's classic proof actually hinges on the monotony and modularity of the valuation function, not on Boolean distributivity. 
    \item[ii)] \textbf{Eliminating global lattice bounds:} We verify that a universal top ($\top$) or bottom ($\bot$) element is unnecessary to guarantee metricity for supermodular, $\log$-submodular, and submodular valuations. Using relatively and sectionally complemented distributive lattices, we show the metric works perfectly well within localized intervals.
    \item[iii)] \textbf{Establishing necessary conditions:} While older research focused almost entirely on sufficient conditions for metricity, we outline the necessary conditions. We prove that the supermodularity of $f$ and $\frac{1}{f}$ is a hard mathematical requirement for the standard generalized formulation to act as a metric, creating a simple theoretical filter for testing new valuation functions.
    \item[iv)] \textbf{Mapping applications:} We tie these structural generalizations back to real computational tasks, showing how the distance metric theoretically applies to non-distributive orthomodular lattices (in quantum information theory) and unbounded dynamic data structures (in machine learning on continuous streams).
\end{itemize} 

These results span all scenarios related to the modularity, supermodularity, or submodularity of $f$. Moreover, they highlight the deep connections between the implicit assumptions found in classic proofs and the metric demands placed on valuation functions.

The rest of the paper unfolds as follows: Section \ref{sec:background} covers the essential theoretical background. Section \ref{sec:modular} details the triangle inequality proof for modular valuations on arbitrary lattices. Section \ref{sec:super-sub-modular} pushes this analysis into supermodular and $\log$-submodular valuations on relatively complemented distributive lattices. Section \ref{sec:sub} tweaks previous results regarding submodular valuations to fit sectionally complemented distributive lattices. Section \ref{sec:nec} outlines the necessary conditions for the triangle inequality. Section \ref{sec:applications} looks at the practical value across information theory and machine learning. Section \ref{sec:open-problems} maps out open theoretical problems, and Section \ref{sec:concl} concludes the work.

\section{Background}
\label{sec:background}

Following Birkhoff \cite{Birkhoff1973}, we can define lattices as either partially ordered sets or algebraic structures. This section briefly reviews the basic definitions, the properties of specific lattice types, and real valuations on lattices.

\begin{definition} (Lattice: order definition)
\begin{itemize}
\item[i)] A partially ordered set (poset) is a set equipped with a binary relation $\leq$ that is reflexive, antisymmetric, and transitive.
\item[ii)] A lattice is a poset where any two elements $X$ and $Y$ share a greatest lower bound (the ``meet'', denoted $X \wedge Y$) and a least upper bound (the ``join'', denoted $X \vee Y$).
\end{itemize}
\end{definition}

\begin{definition} (Lattice: algebraic definition)
Any system utilizing two operations $\vee$ and $\wedge$ that are idempotent, commutative, associative, and follow the absorption laws qualifies as a lattice. 
\end{definition}

The connection lemma \cite{Birkhoff1973} confirms these two definitions are perfectly equivalent, defining the ordering relation as:
\begin{equation}
\label{eq:lattice-order}
X \leq Y \iff X \vee Y = Y \text{ and } X \wedge Y = X
\end{equation}

We can evaluate the elements of a lattice $L$ using a real valuation function $f : L \rightarrow \mathbb{R}$, which allows us to measure similarities between those elements.

\begin{definition} (Classes of valuation functions) Let $f : L \rightarrow \mathbb{R}$ act as a valuation function for lattice $L$.
\begin{itemize}
    \item[i)] $f$ is positive if and only if:
    \begin{equation}
    \label{eq:positive}
    f(X) \geq 0 \quad \forall X\in L
    \end{equation}
    If $f(X) > 0$ for all $X \in L$, then $f$ is strictly positive.
    \item[ii)] $f$ is modular if and only if:
    \begin{equation}
    \label{eq:modular}
    f(X) + f(Y) = f(X \vee Y) + f(X \wedge Y) \quad \forall X,Y \in L
    \end{equation}
    \item[iii)] $f$ is submodular if and only if:
    \begin{equation}
    \label{eq:submodular}
    f(X) + f(Y) \geq f(X \vee Y) + f(X \wedge Y) \quad \forall X,Y \in L
    \end{equation}
    \item[iv)] If $f$ is strictly positive and $\log \circ f$ is submodular, then $f$ is $\log$-submodular.
    \item[v)] $f$ is supermodular if and only if:
    \begin{equation}
    \label{eq:supermodular}
    f(X) + f(Y) \leq f(X \vee Y) + f(X \wedge Y) \quad \forall X,Y \in L
    \end{equation}
    \item[vi)] $f$ is monotone if and only if:
    \begin{equation}
    \label{eq:monotone}
    f(X) \leq f(Y) \quad \forall X,Y \in L \text{ such that } X \leq Y
    \end{equation}
    \item[vii)] $f$ satisfies unconditioned additive diminishing returns if and only if:
    \begin{equation}
    \label{eq:uncon-add-dim-ret}
    f(X\vee Z) - f(X) \geq f(Y\vee Z) - f(Y) \quad \forall X,Y,Z \in L \text{ such that } X \leq Y
    \end{equation}
    \item[viii)] A strictly positive $f$ satisfies unconditioned multiplicative diminishing returns if and only if:
    \begin{equation}
    \label{eq:uncon-mul-dim-ret}
    \frac{f(X\vee Z)}{f(X)} \geq \frac{f(Y\vee Z)}{f(Y)} \quad \forall X,Y,Z \in L \text{ such that } X \leq Y
    \end{equation}
\end{itemize}
\end{definition}

Adding extra axioms creates various specific types of lattices by further constraining the available operations \cite{Roman2008}. While a significant portion of our analysis applies to arbitrary lattices, we also frequently utilize distributive, relatively complemented, and lower bounded lattices, defined as follows:

\begin{definition} (Types of lattices)
\begin{itemize}
    \item[i)] A distributive lattice satisfies the distributivity of $\vee$ with $\wedge$, and $\wedge$ with $\vee$:
    \begin{equation}
    \label{eq:distrib}
    \begin{aligned}
    (X \vee Y) \wedge Z &= (X \wedge Z) \vee (Y \wedge Z) \quad \forall X,Y,Z \in L \\
    (X \wedge Y) \vee Z &= (X \vee Z) \wedge (Y \vee Z) \quad \forall X,Y,Z \in L
    \end{aligned} 
    \end{equation}
    \item[ii)] A lattice is lower bounded if it contains a unique bottom element $\bot \in L$ where $\bot \leq X$ for all $X \in L$. 
    \item[iii)] A lattice $L$ is relatively complemented if, for every $U, V \in L$ where $U \leq V$, and for every $A \in [U,V]$, there is a complement $X \in [U,V]$ of $A$ making $X \wedge A = U$ and $X \vee A = V$. In simpler terms, every closed interval sublattice $[U,V]$ acts as a complemented lattice \cite{Roman2008}.
    \item[iv)] A lattice $L$ is sectionally complemented if it is lower bounded and, for every $V \in L$, the closed interval sublattice $[\bot,V]$ is a complemented lattice \cite{Roman2008}.
\end{itemize}
\end{definition}

In any distributive complemented lattice (which includes relatively or sectionally complemented versions), an element's complement is unique. This rule flows directly from Theorem 10 in \cite{Birkhoff1973} (p. 10).

Suppose $X$ and $Y$ belong to a sectionally complemented distributive lattice with $X \leq Y$ (meaning $X \in [\bot,Y]$). We denote the unique relative complement of $X$ regarding $Y$ (its unique complement within the sublattice $[\bot,Y]$) as $X_Y^\prime$.

\section{Result for Modular Valuations}
\label{sec:modular}

Previous studies show that when the valuation function $f$ is modular, the generalized Jaccard distance cleanly satisfies the triangle inequality on Boolean and distributive lattices \cite{Kosub2019}. Here, we expand this finding to arbitrary (non-distributive) lattices. 

To do this, we adapt Gilbert's concise proof \cite{Gilbert1972}, originally designed for the powerset lattice. By isolating Gilbert's logical needs into a single generalized condition, we show that the distributivity axioms are not actually required. 

\begin{proposition}\label{prop:gilbert-cond-mon-mod} (The Gilbert condition equals modularity and monotony)
A valuation function $f$ on an arbitrary lattice $L$ satisfies the Gilbert condition:
\begin{equation}
\label{eq:gilbert-cond}
\begin{split}
f(A \vee C) + f(C \vee B) + f(A \wedge B) \geq {} & f(A\wedge C) + f(C\wedge B) \\
& + f(A\vee B \vee C) \quad \forall A,B,C \in L
\end{split}
\end{equation}
if and only if $f$ is modular and monotone.
\end{proposition}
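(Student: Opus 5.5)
We prove the two directions separately. For necessity we specialise the Gilbert condition \eqref{eq:gilbert-cond} at well-chosen triples; for sufficiency we show that, under modularity and monotony, the difference of the two sides of \eqref{eq:gilbert-cond} is a sum of nonnegative increments of $f$ along comparable pairs.

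\textbf{Necessity.} Assume \eqref{eq:gilbert-cond} holds for all $A,B,C$.

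For monotony, take $X\le Y$ and set $A=B=X$, $C=Y$. Then $A\vee C=C\vee B=Y$, $A\wedge B=X$, $A\wedge C=C\wedge B=X$ and $A\vee B\vee C=Y$. The condition reads $2f(Y)+f(X)\ge 2f(X)+f(Y)$, which is $f(Y)\ge f(X)$.

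For modularity we need both inequalities.
\begin{itemize}
\item[i)] Take $A=X$, $B=Y$, $C=X\wedge Y$. Then $A\vee C=X$, $C\vee B=Y$, $A\wedge B=X\wedge Y$, $A\wedge C=C\wedge B=X\wedge Y$ and $A\vee B\vee C=X\vee Y$. This gives $f(X)+f(Y)+f(X\wedge Y)\ge 2f(X\wedge Y)+f(X\vee Y)$, i.e.\ submodularity.
\item[ii)] Take $A=X$, $B=Y$, $C=X\vee Y$. Then $A\vee C=C\vee B=X\vee Y$, $A\wedge C=X$, $C\wedge B=Y$ and the top term is $X\vee Y$. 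This gives $2f(X\vee Y)+f(X\wedge Y)\ge f(X)+f(Y)+f(X\vee Y)$, i.e.\ supermodularity.
\end{itemize}
Together these give modularity.

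\textbf{Sufficiency.} Assume $f$ is modular and monotone. Apply modularity to the pair $(A\vee C,\,C\vee B)$. Their join is $A\vee B\vee C$, and their meet $M=(A\vee C)\wedge(C\vee B)$ satisfies $M\ge C$ and $M\ge A\wedge B$, hence $M\ge C\vee(A\wedge B)$. We get
\[
f(A\vee C)+f(C\vee B)=f(A\vee B\vee C)+f(M)\ge f(A\vee B\vee C)+f\bigl(C\vee(A\wedge B)\bigr),
\]
using monotony in the last step. Next apply modularity to the pair $(C,\,A\wedge B)$:
\[
f\bigl(C\vee(A\wedge B)\bigr)+f(A\wedge B\wedge C)=f(C)+f(A\wedge B).
\]
It therefore suffices to prove
\[
f(C)+2f(A\wedge B)-f(A\wedge B\wedge C)\ge f(A\wedge C)+f(C\wedge B).
\]

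To see this, apply modularity to the pair $(A\wedge C,\,B\wedge C)$. Their meet is $A\wedge B\wedge C$ and their join $J$ satisfies $J\le C$. This gives
\[
f(A\wedge C)+f(C\wedge B)=f(J)+f(A\wedge B\wedge C)\le f(C)+f(A\wedge B\wedge C),
\]
again by monotony. So the needed inequality reduces to
\[
f(C)+2f(A\wedge B)-f(A\wedge B\wedge C)\ge f(C)+f(A\wedge B\wedge C),
\]
that is, $2\bigl(f(A\wedge B)-f(A\wedge B\wedge C)\bigr)\ge 0$. This is exactly monotony applied to $A\wedge B\wedge C\le A\wedge B$.

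\textbf{Expected obstacle.} The delicate step is the sufficiency direction, in which distributivity must be avoided. The key is never to compute $(A\vee C)\wedge(C\vee B)$ or $(A\wedge C)\vee(C\wedge B)$ explicitly. We only bound them above or below using the one-sided inequalities that hold in every lattice, namely $(A\vee C)\wedge(C\vee B)\ge C\vee(A\wedge B)$ and $(A\wedge C)\vee(C\wedge B)\le C$, and let monotony absorb the resulting gaps. Before finalising we will double-check the bookkeeping of the three modular identities used, since that is where sign errors are most likely.
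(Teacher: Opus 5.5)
Your proof is correct. The necessity direction uses the same substitutions as the paper. The only difference is for monotony: the paper doubles the larger element, $A=B=X\geq Y=C$, whereas you double the smaller one, which works equally well.

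For sufficiency you take a different decomposition.
The paper pivots on $B$. It applies modularity to the pairs $(A\vee C,B)$ and $(A\wedge C,B)$, which turns the gap between the two sides of the Gilbert condition into $[f(A\wedge B)-f(A\wedge B\wedge C)]+[f(C\vee B)-f((A\wedge C)\vee B)]+[f((A\vee C)\wedge B)-f(C\wedge B)]$. Each bracket is nonnegative because $\vee$ and $\wedge$ are monotone in one argument.
You instead apply modularity to the pairs $(A\vee C,C\vee B)$, $(A\wedge C,C\wedge B)$ and $(C,A\wedge B)$. With your $M$ and $J$, this gives the exact identity that the gap equals $[f(M)-f(C\vee(A\wedge B))]+[f(C)-f(J)]+2[f(A\wedge B)-f(A\wedge B\wedge C)]$.

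Your route preserves the $A\leftrightarrow B$ symmetry of the condition. It also makes visible that the one place where distributivity would give an equality, $(A\vee C)\wedge(B\vee C)=(A\wedge B)\vee C$, is needed only as the inequality $\geq$ valid in every lattice. The paper makes exactly this observation separately when it repairs Kosub's proof, so your argument effectively merges that remark with the proof of the proposition.

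The paper's route breaks the symmetry. In exchange, it only ever compares elements that differ in a single argument of a join or meet, so it never needs a bound on a meet of two joins or a join of two meets.
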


\begin{proof}
$(\Rightarrow)$ First, assume the Gilbert condition is true.

Setting $A=X$, $B=Y$, and $C=X\wedge Y$ in (\ref{eq:gilbert-cond}) produces submodularity:
$$f(X) + f(Y) + f(X \wedge Y) \geq f(X \wedge Y) + f(X \wedge Y) + f(X \vee Y)$$
$$f(X) + f(Y) \geq f(X \vee Y) + f(X \wedge Y)$$

Next, setting $A=X$, $B=Y$, and $C=X\vee Y$ in (\ref{eq:gilbert-cond}) reveals supermodularity:
$$f(X \vee Y) + f(X \vee Y) + f(X \wedge Y) \geq f(X) + f(Y) + f(X \vee Y)$$
$$f(X \vee Y) + f(X \wedge Y) \geq f(X) + f(Y)$$

Putting these two inequalities together proves that $f$ is strictly modular.

Finally, substituting $A=X$, $B=X$, and $C=Y$ (where $X \geq Y$) into (\ref{eq:gilbert-cond}) gives:
$$f(X) + f(X) + f(X) \geq f(Y) + f(Y) + f(X)$$
$$2f(X) \geq 2f(Y) \implies f(X) \geq f(Y)$$
confirming that $f$ is monotone.

$(\Leftarrow)$ Conversely, assume $f$ is modular and monotone. Applying the modularity rule to $A \vee C$ and $B$, we get:
\begin{equation}
\label{eq:from-modularity}
f(A \vee B \vee C) = f(A \vee C) + f(B) - f((A\vee C) \wedge B)
\end{equation}

Plugging (\ref{eq:from-modularity}) into the target Gilbert condition (\ref{eq:gilbert-cond}) and shuffling the terms, we need to prove:
\begin{equation}
\label{eq:to-prove-1}
f(C \vee B) + f(A \wedge B) + f((A\vee C) \wedge B) \geq f(A\wedge C) + f(C\wedge B) + f(B)
\end{equation}

Using modularity on $X=A\wedge C$ and $Y=B$ results in $f(A\wedge C) + f(B) = f((A\wedge C) \vee B) + f(A \wedge B \wedge C)$. Since $f$ is monotone and $A \wedge B \geq A \wedge B \wedge C$, it naturally follows that $f(A \wedge B) \geq f(A \wedge B \wedge C)$. This creates a helpful lower bound for $f(A \wedge B)$:
\begin{equation}
\label{eq:lower-bound}
f(A \wedge B) \geq f(A \wedge C) + f(B) - f((A\wedge C) \vee B)
\end{equation}

Inserting (\ref{eq:lower-bound}) back into (\ref{eq:to-prove-1}) and canceling terms simplifies our target to:
\begin{equation}
\label{eq:to-prove-2}
f(C \vee B) + f((A\vee C)\wedge B) \geq f((A\wedge C) \vee B) + f(C\wedge B)
\end{equation}

Because $C \geq A \wedge C$, it is clearly true that $C\vee B \geq (A\wedge C) \vee B$. Relying on $f$'s monotonicity, we get $f(C \vee B) \geq f((A\wedge C)\vee B)$. 
Likewise, since $A\vee C \geq C$, we know $(A\vee C) \wedge B \geq C \wedge B$. Again by monotonicity, $f((A\vee C) \wedge B) \geq f(C\wedge B)$. 
Adding these two inequalities together lands exactly on (\ref{eq:to-prove-2}), wrapping up the proof.
\end{proof}

\begin{proposition}\label{prop-gilbert-triangle} (The Gilbert condition guarantees the triangle inequality)
If the Gilbert condition (\ref{eq:gilbert-cond}) holds for a strictly positive valuation $f$, then the generalized distance $d_{f,J}(\cdot,\cdot) = 1 - J_f(\cdot,\cdot)$ on arbitrary lattices safely satisfies the triangle inequality.
\end{proposition}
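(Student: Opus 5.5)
The plan is to follow Gilbert's common-denominator argument, with Proposition~\ref{prop:gilbert-cond-mon-mod} supplying the order-theoretic facts that the powerset proof normally takes from set algebra. By Proposition~\ref{prop:gilbert-cond-mon-mod}, the Gilbert condition implies that $f$ is monotone (and modular). Fix $A,B,C \in L$ and write $U = f(A\vee B\vee C)$. Strict positivity makes every denominator $f(X\vee Y)$ nonzero, so $d_{f,J}$ is well defined. For any $X,Y$ we can write
$$d_{f,J}(X,Y) = \frac{f(X\vee Y) - f(X\wedge Y)}{f(X\vee Y)}.$$
Monotonicity together with $X\wedge Y \leq X\vee Y$ shows that the numerator is nonnegative.

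The first step bounds the right-hand side from below by fractions with the common denominator $U$. Since $A\vee C \leq A\vee B\vee C$, monotonicity gives $0 < f(A\vee C) \leq U$. Because the numerator $f(A\vee C)-f(A\wedge C)$ is nonnegative, enlarging the denominator can only decrease the fraction:
$$d_{f,J}(A,C) \geq \frac{f(A\vee C)-f(A\wedge C)}{U}.$$
The same argument gives
$$d_{f,J}(C,B) \geq \frac{f(C\vee B)-f(C\wedge B)}{U}.$$
Adding the two bounds and applying the Gilbert condition (\ref{eq:gilbert-cond}), rearranged as
$$f(A\vee C)-f(A\wedge C)+f(C\vee B)-f(C\wedge B) \geq U - f(A\wedge B),$$
yields
$$d_{f,J}(A,C)+d_{f,J}(C,B) \geq 1 - \frac{f(A\wedge B)}{U}.$$

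The final step compares this bound with $d_{f,J}(A,B)$. Since $A\vee B \leq A\vee B\vee C$, we have $f(A\vee B) \leq U$. Because $f(A\wedge B) \geq 0$, it follows that
$$\frac{f(A\wedge B)}{U} \leq \frac{f(A\wedge B)}{f(A\vee B)},$$
and therefore $1 - f(A\wedge B)/U \geq d_{f,J}(A,B)$, which is the triangle inequality.

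The one delicate point is the direction of each denominator replacement. In the first step the fraction $(\text{num})/f(\cdot)$ shrinks when its denominator is raised to $U$, and this is valid only because each numerator is nonnegative, which in turn rests on the monotonicity extracted from Proposition~\ref{prop:gilbert-cond-mon-mod}. In the final step the subtracted term shrinks when its denominator is raised to $U$, which needs $f(A\wedge B)\ge 0$ and $f(A\vee B)\le U$. I would state both monotonicity uses explicitly, since no distributivity is available to justify them.
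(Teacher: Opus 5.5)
Your proposal is correct and follows essentially the same route as the paper. You extract monotonicity via Proposition~\ref{prop:gilbert-cond-mon-mod}, bound $d_{f,J}(A,C)$ and $d_{f,J}(C,B)$ from below and $d_{f,J}(A,B)$ from above using the common denominator $f(A\vee B\vee C)$, and close with the rearranged Gilbert condition; your explicit remark that the upper-bound step needs $f(A\wedge B)\ge 0$ just spells out what the paper attributes to strict positivity.
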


\begin{proof}
Recall the generalized Jaccard distance definition:
\begin{equation}
\label{eq:gen-jaccard-dist}
d_{f,J}(A,B) = 1 - \frac{f(A \wedge B)}{f(A \vee B)} = \frac{f(A \vee B) - f(A \wedge B)}{f(A \vee B)}
\end{equation}

Let $T = A \vee B \vee C$. Based on Proposition \ref{prop:gilbert-cond-mon-mod}, $f$ is monotone, which means $f(A \vee B) \leq f(T)$. Given that $f$ is strictly positive, we can set an upper bound for the distance:
\begin{equation}
\label{eq:upper-bound}
d_{f,J}(A,B) \leq 1 - \frac{f(A \wedge B)}{f(T)} = \frac{f(T) - f(A \wedge B)}{f(T)}
\end{equation}

Because the numerator $f(A \vee B) - f(A \wedge B)$ is non-negative (thanks to monotonicity) and $f(A \vee B) \leq f(T)$, increasing the denominator produces a solid lower bound:
\begin{equation}
\label{eq:lower-bound-dist}
d_{f,J}(A,B) = \frac{f(A \vee B) - f(A \wedge B)}{f(A \vee B)} \geq \frac{f(A \vee B) - f(A \wedge B)}{f(T)}
\end{equation}

Adding the inequalities from (\ref{eq:lower-bound-dist}) for $d_{f,J}(A,C)$ and $d_{f,J}(C,B)$ gives us:
\begin{equation}
\label{eq:to-prove-3}
d_{f,J}(A,C) + d_{f,J}(C,B) \geq \frac{f(A \vee C) - f(A \wedge C) + f(C \vee B) - f(C \wedge B)}{f(T)}
\end{equation}

To confirm the triangle inequality $d_{f,J}(A,C) + d_{f,J}(C,B) \geq d_{f,J}(A,B)$, we compare (\ref{eq:to-prove-3}) and (\ref{eq:upper-bound}). We just need to show:
\begin{equation}
\label{eq:to-prove-4}
\frac{f(A \vee C) - f(A\wedge C) + f(C\vee B) - f(C \wedge B)}{f(T)} \geq \frac{f(T) - f(A\wedge B)}{f(T)}
\end{equation}

Since $f(T) > 0$, we can safely drop the denominators. Rearranging the remaining terms brings us directly back to the Gilbert condition (\ref{eq:gilbert-cond}), which we assumed is true.
\end{proof}

This leads directly to the main result of this section, combining Propositions \ref{prop:gilbert-cond-mon-mod} and \ref{prop-gilbert-triangle}.

\begin{theorem}\label{th:mod}
If a valuation function $f$ on an arbitrary lattice $L$ is strictly positive, monotone, and modular, then the resulting distance $d_{f,J}$ satisfies the triangle inequality.
\end{theorem}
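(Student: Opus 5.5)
Theorem \ref{th:mod} follows by chaining the two preceding propositions, so the plan is short. Let $f$ be strictly positive, monotone and modular on an arbitrary lattice $L$. First invoke the $(\Leftarrow)$ direction of Proposition \ref{prop:gilbert-cond-mon-mod}. Modularity and monotony alone give the Gilbert condition (\ref{eq:gilbert-cond}) for all $A,B,C\in L$, with no distributivity required. The only lattice facts used there are the isotonicity of $\vee$ and $\wedge$ and the inequality $A\wedge B\geq A\wedge B\wedge C$, both valid in any lattice.

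Next, apply Proposition \ref{prop-gilbert-triangle}, whose hypotheses are exactly the Gilbert condition and strict positivity of $f$. Strict positivity guarantees $f(A\vee B)>0$, so $J_f$ and hence $d_{f,J}$ are well defined on all pairs. Set $T=A\vee B\vee C$. The proposition bounds $d_{f,J}(A,B)$ from above by $(f(T)-f(A\wedge B))/f(T)$, and bounds each of $d_{f,J}(A,C)$ and $d_{f,J}(C,B)$ from below by the same expression with the denominator enlarged to $f(T)$. Clearing the common positive denominator $f(T)$ reduces the triangle inequality to (\ref{eq:gilbert-cond}), which holds by the first step.

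The main point to check is that monotonicity, which Proposition \ref{prop-gilbert-triangle} uses, is available. It is, because it is assumed directly in the theorem, and it is also recoverable from the Gilbert condition via the $(\Rightarrow)$ direction of Proposition \ref{prop:gilbert-cond-mon-mod}. Monotonicity enters twice: to show $f(A\vee B)\leq f(T)$, and to show the numerators $f(X\vee Y)-f(X\wedge Y)$ are nonnegative, which is what justifies replacing denominators by $f(T)$ in the lower bounds. Once these points are confirmed, the theorem is the composition of the two propositions, and no further case analysis on the lattice structure is needed.
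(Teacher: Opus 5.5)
Your proposal is correct and follows the paper's own argument: the paper also obtains Theorem~\ref{th:mod} by combining the $(\Leftarrow)$ direction of Proposition~\ref{prop:gilbert-cond-mon-mod} with Proposition~\ref{prop-gilbert-triangle}, where both distances are compared over the common denominator $f(T)$ with $T=A\vee B\vee C$. Your account of where monotonicity enters is accurate. One small ingredient is worth naming: the upper bound on $d_{f,J}(A,B)$ also needs $f(A\wedge B)\geq 0$, and strict positivity supplies it.
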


Note that while Theorem \ref{th:mod} guarantees the triangle inequality, the resulting distance $d_{f,J}$ technically forms a pseudometric unless $f$ is strictly monotone, which is required to satisfy the identity of indiscernibles ($d_{f,J}(A,B) = 0 \iff A = B$).

Proposition \ref{prop-gilbert-triangle} demonstrates how Gilbert's condition (\ref{eq:gilbert-cond}) distills the minimum mathematical requirements supporting the triangle inequality. By proving that this condition is mathematically equivalent to the monotony and modularity of positive valuations, we establish that the full suite of Boolean axioms is unnecessary.

Looking closely at Kosub's original proof (Theorem 3 in \cite{Kosub2019}) adds a bit more context. Kosub relied on distributivity specifically to prove $(A \vee C) \wedge (B \vee C) = (A \wedge B) \vee C$. However, strict equality isn't actually required to finish the proof. Showing the inequality is enough:
\begin{equation}
\label{eq:ineq-th3}
(A \vee C) \wedge (B \vee C) \geq (A \wedge B) \vee C
\end{equation}

In any arbitrary lattice, $A \vee C \geq C$ and $B \vee C \geq C$, pushing $(A \vee C) \wedge (B \vee C) \geq C$. By the same logic, $A \vee C \geq A$ and $B \vee C \geq B$, meaning $(A \vee C) \wedge (B \vee C) \geq A \wedge B$. Combining these bounds comfortably proves (\ref{eq:ineq-th3}). With this small adjustment, Kosub's proof also holds up for arbitrary lattices without needing the distributivity axiom.

Finally, as Rutherford noted \cite{Rutherford1965} (p. 20), any lattice supporting a monotone and modular valuation is categorized as a \textit{metric lattice}, making it inherently modular. A straightforward distance measure can be set on these structures as $\delta(X,Y) = f(X\vee Y) - f(X \wedge Y)$. Theorem \ref{th:mod} simply offers $d_{f,J}$ as a normalized, practical alternative for these metric lattices.

Alternatively, by taking the distance $\delta$ and applying the biotope transform \cite{Deza2016} (p. 88), one can find a different path to prove metricity for $d_{f,J}$ under tighter constraints, specifically, when the lattice has a lower bound $\bot$ and $f(\bot) = 0$. The biotope transform produces this metric distance for lower-bounded lattices:
\begin{equation}
\label{eq:gen-jaccard-distance-lower-bound}
d_{f,J}^{\bot}(A,B) = 1 - \frac{f(A \wedge B) - f(\bot)}{f(A \vee B) - f(\bot)}
\end{equation}

Notice that if a lattice has a lower bound $\bot$, $f$ is monotone, and $f(\bot) = 0$, the function $f$ naturally becomes positive. This means explicit positivity is not a separate requirement in this highly specific edge case.

\section{Result for Supermodular and $\log$-Submodular Valuations}
\label{sec:super-sub-modular}

Here, we ease up on the strict modularity requirement. We prove that the triangle inequality holds up fine if the valuation function is supermodular and $\log$-submodular, as long as the underlying lattice features relative complementarity and distributivity.

\begin{proposition}\label{prop:submodular} (Properties of submodular and $\log$-submodular valuations)
For any valuation function $f$ on an arbitrary lattice $L$:
\begin{itemize}
\item[i)] If $f$ is monotone and submodular, it naturally satisfies unconditioned additive diminishing returns (\ref{eq:uncon-add-dim-ret}).
\item[ii)] If $f$ is strictly positive, monotone, and $\log$-submodular, it satisfies unconditioned multiplicative diminishing returns (\ref{eq:uncon-mul-dim-ret}).
\end{itemize}
\end{proposition}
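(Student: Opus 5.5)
The plan for part i) is to apply submodularity to a well-chosen pair of elements, and then use monotonicity to tighten the result. Fix $X,Y,Z \in L$ with $X \leq Y$. I would apply submodularity (\ref{eq:submodular}) to the pair $X \vee Z$ and $Y$. Because $X \leq Y$, their join is $(X\vee Z)\vee Y = Y \vee Z$. This gives
$$f(X\vee Z) + f(Y) \geq f(Y\vee Z) + f((X\vee Z)\wedge Y).$$
Next I would note that, in any lattice, $X \leq X\vee Z$ and $X \leq Y$, so $X \leq (X\vee Z)\wedge Y$. Monotonicity then gives $f((X\vee Z)\wedge Y) \geq f(X)$. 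Substituting this lower bound into the right-hand side yields $f(X\vee Z) + f(Y) \geq f(Y\vee Z) + f(X)$. Rearranging gives exactly (\ref{eq:uncon-add-dim-ret}). No distributivity is used, since only the absorption-type identity $(X\vee Z)\vee Y = Y\vee Z$ and the elementary bound on the meet are needed.

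For part ii), I would run the same argument on $g = \log \circ f$, which is well defined because $f$ is strictly positive. The function $g$ is submodular by the definition of $\log$-submodularity. It is also monotone, since $\log$ is increasing and $f$ is monotone. Part i) applied to $g$ then gives $\log f(X\vee Z) - \log f(X) \geq \log f(Y\vee Z) - \log f(Y)$. Exponentiating, which preserves the order, gives the ratio inequality (\ref{eq:uncon-mul-dim-ret}).

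The only step needing care is the choice of the pair fed into submodularity in part i), namely $(X\vee Z, Y)$ rather than, say, $(X\vee Z, Y \vee Z)$. With this pair, the join collapses to $Y\vee Z$ using only $X \leq Y$. The meet is then controlled from below by $X$ through monotonicity, rather than requiring an equality such as $(X\vee Z)\wedge Y = X \vee (Z\wedge Y)$, which would fail in non-distributive lattices. Once this choice is made, both parts are routine.
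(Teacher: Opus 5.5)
Your proposal is correct and follows essentially the same route as the paper. For part i) you apply submodularity to the pair $(X\vee Z, Y)$, collapse the join to $Y\vee Z$ using $X\leq Y$, and bound the meet below by $X$ via monotonicity; for part ii) you apply part i) to the monotone submodular function $\log\circ f$ and exponentiate.
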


\begin{proof}
i) Assuming $f$ is monotone and submodular, let $X \leq Y$. Using the submodularity definition (\ref{eq:submodular}) on the elements $X\vee Z$ and $Y$ yields:
\begin{equation*}
f(X\vee Z) + f(Y) \geq f(X\vee Z \vee Y) + f((X \vee Z) \wedge Y) 
\end{equation*}

Because $X \leq Y$, we know $X \vee Y = Y$, which neatly simplifies the first term on the right to $f(Y \vee Z)$. 

Also, since $X \leq X \vee Z$ and $X \leq Y$, it follows that $X \leq (X \vee Z) \wedge Y$. Thanks to the monotonicity of $f$:
\begin{equation*}
f((X \vee Z) \wedge Y) \geq f(X)
\end{equation*}

Dropping these bounds back into our initial inequality gives:
\begin{equation*}
f(X \vee Z) + f(Y) \geq f(Y \vee Z) + f(X)
\end{equation*}
Shuffling the terms directly maps to the unconditioned additive diminishing returns condition (\ref{eq:uncon-add-dim-ret}).

ii) If $f$ is strictly positive and $\log$-submodular, the composition $\log\circ f$ is automatically submodular and monotone. Passing this through the result from point (i) gives:
\begin{equation*}
\log(f(X\vee Z)) - \log(f(X)) \geq \log(f(Y \vee Z)) - \log(f(Y)) 
\end{equation*}
By standard logarithm rules, exponentiating both sides instantly confirms the unconditioned multiplicative diminishing returns condition (\ref{eq:uncon-mul-dim-ret}).
\end{proof}

\begin{theorem}\label{th:super-log-sub}
If a valuation function $f$ on a relatively complemented distributive lattice $L$ is strictly positive, monotone, supermodular, and $\log$-submodular, then the resulting distance $d_{f,J}$ defined by (\ref{eq:gen-jaccard-dist}) satisfies the triangle inequality.
\end{theorem}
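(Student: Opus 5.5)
The plan is to reduce, by a projection argument, to the case where $C$ lies in the interval $[A\wedge B, A\vee B]$, and then to exploit the Boolean structure of that interval together with Proposition \ref{prop:submodular}(ii) and supermodularity. Throughout, write $U=A\wedge B$ and $V=A\vee B$. The triangle inequality $d_{f,J}(A,B)\le d_{f,J}(A,C)+d_{f,J}(C,B)$ is equivalent to
\begin{equation*}
\frac{f(A\wedge C)}{f(A\vee C)}+\frac{f(B\wedge C)}{f(B\vee C)}\le 1+\frac{f(U)}{f(V)} .
\end{equation*}

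\textbf{Step 1 (projection).} Replace $C$ by $C_1=C\vee U$, then by $C_2=C_1\wedge V$. Since $U\le A$, we have $A\vee C_1=A\vee C$ and $A\wedge C_1\ge A\wedge C$. By monotonicity, the similarity with $A$ does not decrease, and the same holds for $B$. Since $A\le V$, distributivity gives $A\wedge C_2=A\wedge C_1$ and $A\vee C_2=(A\vee C_1)\wedge V\le A\vee C_1$, so again both similarities do not decrease. Hence the left-hand side above can only grow, and it suffices to treat $U\le C\le V$. This step uses only monotonicity and distributivity.

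\textbf{Step 2 (decomposition inside $[U,V]$).} Because $L$ is relatively complemented and distributive, $[U,V]$ is a Boolean algebra with $U$ as bottom and $V$ as top, and $A$ and $B$ are complements in it. Set $P=A\wedge C$ and $Q=B\wedge C$. Distributivity gives $P\wedge Q=U$ and $P\vee Q=C$. It also gives $A\vee C=A\vee Q$ and $B\vee C=B\vee P$, since $C=P\vee Q$ and $P\le A$, $Q\le B$. The target then reads
\begin{equation*}
\frac{f(P)}{f(A\vee Q)}+\frac{f(Q)}{f(B\vee P)}\le 1+\frac{f(U)}{f(V)} .
\end{equation*}
Apply Proposition \ref{prop:submodular}(ii) with $X=Q\le Y=B$ and $Z=A$. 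This gives $f(A\vee Q)\ge f(Q)f(V)/f(B)$, and symmetrically $f(B\vee P)\ge f(P)f(V)/f(A)$. Supermodularity applied to the pairs $(P,Q)$ and $(A,B)$ gives $f(P)+f(Q)\le f(C)+f(U)$ and $f(A)+f(B)\le f(V)+f(U)$. Monotonicity gives $f(U)\le f(P),f(Q)$ and $f(A\vee Q)\ge\max(f(A),f(C))$.

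\textbf{Step 3 (main obstacle: the final scalar inequality).} What remains is purely numerical. One must combine the lower bounds on the denominators, which come from log-submodularity, with the upper bounds on the numerators, which come from supermodularity. I expect this to be the hardest step, because the bounds from Proposition \ref{prop:submodular}(ii) alone yield something like $\frac{f(P)f(B)+f(Q)f(A)}{\ldots}$ terms that do not close by themselves. First I would try the denominator bounds $f(A\vee Q)\ge \max\{f(A),\,f(Q)f(V)/f(B)\}$ and reduce to two-variable inequalities in $s=f(P)$ and $t=f(Q)$. The left side is convex in each variable separately, so I would push $s$ and $t$ to the extremes allowed by supermodularity and monotonicity, namely $s=f(U)$, $s=f(A)$, or the boundary $s+t=f(C)+f(U)$. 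I would then check each extreme case, where the inequality should collapse to the supermodularity or log-submodularity of the pair $(A,B)$ itself. If this does not close, the fallback is to mimic Gilbert's grouping from Proposition \ref{prop-gilbert-triangle}. That means replacing $f(A\vee C)$ and $f(B\vee C)$ by the common upper bound $f(V)$ only in the numerators of the distances, and then proving the resulting additive inequality using supermodularity on $(P,Q)$ and multiplicative diminishing returns on $Q\le B$ and $P\le A$.
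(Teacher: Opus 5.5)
Your Steps 1 and 2 are sound. The projection $C\mapsto (C\vee U)\wedge V$ can only increase both similarities, and inside the Boolean interval $[U,V]$ the identities $P\wedge Q=U$, $P\vee Q=C$, $A\vee C=A\vee Q$, $B\vee C=B\vee P$ are correct. The gap is Step 3. That is where the whole content of the theorem sits, and it cannot be closed from the facts you collected. Take the values $f(U)=2$, $f(A)=4$, $f(B)=6$, $f(V)=8$, $f(P)=3$, $f(Q)=5$, $f(C)=6$, $f(A\vee Q)=20/3$, $f(B\vee P)=6$. They satisfy every inequality listed in Step 2:
both diminishing-returns bounds and both supermodularity bounds hold with equality, all the monotonicity bounds hold, and even $f(A)f(B)\ge f(U)f(V)$. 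Yet $\frac{3}{20/3}+\frac{5}{6}=\frac{77}{60}>\frac{5}{4}=1+\frac{f(U)}{f(V)}$.

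So no case analysis in $s,t$ can succeed. In a genuine lattice these numbers are excluded only by further structure, e.g. supermodularity on $(B,P)$, which forces $f(B\vee P)\ge 7$. The separate-convexity claim is also false: $t/\max\{f(B),sf(V)/f(A)\}$ is constant and then strictly decreasing in $s$, which is a concave kink.

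The Gilbert-style fallback fails outright. Take $C=A\wedge B$ and bound $d_{f,J}(A,C)$ and $d_{f,J}(C,B)$ with $f(V)$ in the denominators, as in Proposition~\ref{prop-gilbert-triangle}. What remains to prove is $f(A)+f(B)\ge f(A\wedge B)+f(A\vee B)$, i.e. submodularity on $(A,B)$. This is false for $f(\bot)=1$, $f=2$ on both atoms, $f(\top)=4$ on the four-element Boolean algebra, and that $f$ satisfies all hypotheses of the theorem.

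The missing idea is to apply Proposition~\ref{prop:submodular}(ii) to each fraction's own numerator and denominator. Choose $Z$ as a relative complement that lifts the denominator to the top of the interval; this is what the paper does with $C'$ and $A'$. In your reduced setting, let $R$ be the complement of $Q$ in $[U,B]$ and $S$ the complement of $P$ in $[U,A]$. Note that $R$ is exactly the complement of $A\vee Q=A\vee C$ in $[U,V]$.

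With $X=P\le Y=A\vee Q$ and $Z=R$ you get $Y\vee Z=A\vee B=V$. Hence $\frac{f(P)}{f(A\vee Q)}\le\frac{f(P\vee R)}{f(V)}$, and symmetrically $\frac{f(Q)}{f(B\vee P)}\le\frac{f(Q\vee S)}{f(V)}$. Now apply supermodularity to $P\vee R$ and $Q\vee S$. Their join is $(P\vee S)\vee(Q\vee R)=A\vee B=V$. By distributivity their meet is $(P\wedge Q)\vee(P\wedge S)\vee(R\wedge Q)\vee(R\wedge S)=U$, since $R\wedge S\le A\wedge B=U$. Hence $f(P\vee R)+f(Q\vee S)\le f(V)+f(U)$, which is exactly your target.

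The paper runs this argument without your projection, in $[A\wedge B\wedge C,A\vee B\vee C]$, so it needs extra monotonicity steps at the end to compare with the top and bottom of that interval. After your Step 1 those comparisons become equalities. Your route is therefore a genuine simplification, but only once this step is supplied.
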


\begin{proof}
Proving the triangle inequality essentially requires proving this algebraic equivalent:
\begin{equation}
\label{eq:triangle_ineq_equiv}
1+ \frac{f(A \wedge C)}{f(A \vee C)} \geq \frac{f(A \wedge B)}{f(A \vee B)} + \frac{f(B \wedge C)}{f(B \vee C)} \quad \forall A,B,C \in L
\end{equation}

Let $V = A \wedge B \wedge C$ and $T = A \vee B \vee C$. Since $L$ is a relatively complemented distributive lattice, the localized closed interval sublattice $[V, T]$ behaves just like a Boolean algebra. 

Let $C^\prime$ be the unique relative complement of $A\vee B$ inside this interval $[V, T]$. By its definition:
\begin{equation}
\label{eq:complement_a_or_b}
(A\vee B) \wedge C^\prime = V \quad \text{and} \quad (A \vee B) \vee C^\prime = T
\end{equation} 

We first need to confirm that $C^\prime \leq C$. Because $C^\prime$ lives in $[V, T]$, we know $C^\prime = C^\prime \wedge T$. If we expand $T$ and apply distributivity:
\begin{equation}
\label{eq:cprime_upper_bound_proof}
C^\prime = C^\prime \wedge (A \vee B \vee C) = (C^\prime \wedge (A \vee B)) \vee (C^\prime \wedge C) = V \vee (C^\prime \wedge C)
\end{equation} 
Because $V \leq C$, it must be that $V \leq C^\prime \wedge C$. Therefore, $V \vee (C^\prime \wedge C) = C^\prime \wedge C$, confirming $C^\prime \leq C$.

Using the same logic, let $A^\prime$ be the unique relative complement of $B\vee C$ in $[V, T]$. Symmetrically, $(B \vee C) \wedge A^\prime = V$, $(B \vee C) \vee A^\prime = T$, and $A^\prime \leq A$.

Looking at the intersection property of $C^\prime$, $(A\vee B) \wedge C^\prime = V$. Distributing this leaves $(A \wedge C^\prime) \vee (B \wedge C^\prime) = V$. Since both parts must be greater than or equal to $V$, it locks in:
\begin{equation}
\label{eq:eq_v}
A \wedge C^\prime = V \quad \text{and} \quad B \wedge C^\prime = V
\end{equation}
Following the pattern, $B \wedge A^\prime = V$ and $C \wedge A^\prime = V$. Also, because $A^\prime \leq A$, we know $A^\prime \wedge C^\prime \leq A \wedge C^\prime = V$. With $V \leq A^\prime$ and $V \leq C^\prime$, it is guaranteed that $A^\prime \wedge C^\prime = V$.

Now, since $A \wedge B \leq A \vee B$, we apply the multiplicative diminishing returns from Proposition \ref{prop:submodular}(ii) using $Z = C^\prime$:
\begin{equation}
\label{eq:up_ab}
\frac{f(A \wedge B)}{f(A \vee B)} \leq \frac{f((A \wedge B) \vee C^\prime)}{f((A \vee B) \vee C^\prime)} = \frac{f((A \wedge B) \vee C^\prime)}{f(T)}
\end{equation}

Similarly, since $B \wedge C \leq B \vee C$, plugging in $Z = A^\prime$ yields:
\begin{equation}
\label{eq:up_bc}
\frac{f(B \wedge C)}{f(B \vee C)} \leq \frac{f((B \wedge C) \vee A^\prime)}{f((B \vee C) \vee A^\prime)} = \frac{f((B \wedge C) \vee A^\prime)}{f(T)}
\end{equation}

Combining (\ref{eq:up_ab}) and (\ref{eq:up_bc}) creates a solid upper bound for the right side of our target inequality:
\begin{equation}
\label{eq:add_up_ab_bc}
\frac{f(A \wedge B)}{f(A \vee B)} + \frac{f(B \wedge C)}{f(B \vee C)} \leq \frac{f((A \wedge B) \vee C^\prime) + f((B \wedge C) \vee A^\prime)}{f(T)}
\end{equation}

Concurrently, by $f$'s monotonicity, we know $f(A \vee C) \leq f(T)$. This creates a lower bound for the left side of our target inequality:
\begin{equation}
\label{eq:lhs_bound}
1 + \frac{f(A \wedge C)}{f(A \vee C)} \geq 1 + \frac{f(A \wedge C)}{f(T)} = \frac{f(T) + f(A \wedge C)}{f(T)}
\end{equation}

Comparing (\ref{eq:add_up_ab_bc}) and (\ref{eq:lhs_bound}), we just need to ensure the numerators hold up:
\begin{equation}
\label{eq:to_prove_triang}
f((A \wedge B) \vee C^\prime) + f((B \wedge C) \vee A^\prime) \leq f(T) + f(A \wedge C)
\end{equation}

Since $f$ is supermodular, we can bound the left side of (\ref{eq:to_prove_triang}) using the join and meet of its inputs:
\begin{equation}
\label{eq:to_prove_triang_2}
f((A \wedge B)\vee C^\prime ) + f((B \wedge C) \vee A^\prime ) \leq f(X_{join}) + f(X_{meet})
\end{equation}
where $X_{join} = ((A\wedge B)\vee C^\prime) \vee ((B\wedge C)\vee A^\prime)$ and $X_{meet} = ((A\wedge B)\vee C^\prime) \wedge ((B\wedge C)\vee A^\prime)$.

For $X_{join}$, since $A^\prime \leq A$ and $C^\prime \leq C$, we see $X_{join} \leq (A \vee C) \vee (C \vee A) = A \vee C$. By monotonicity:
\begin{equation}
\label{eq:to_prove_triang_3p}
f(X_{join}) \leq f(A \vee C) \leq f(T)
\end{equation}

For $X_{meet}$, we leverage the distributive lattice to fully expand the intersection:
\begin{equation}
\label{eq:to_prove_triang_3}
X_{meet} = (A \wedge B \wedge C) \vee (A \wedge B \wedge A^\prime) \vee (C^\prime \wedge B \wedge C) \vee (C^\prime \wedge A^\prime)
\end{equation}
Applying the identities from (\ref{eq:eq_v}) streamlines this down to:
\begin{equation}
X_{meet} = V \vee (A \wedge V) \vee (V \wedge C) \vee V = V
\end{equation}
Given $V = A \wedge B \wedge C$, it is obvious that $V \leq A \wedge C$. Once again, by monotonicity:
\begin{equation}
\label{eq:to_prove_triang_4}
f(X_{meet}) = f(V) \leq f(A \wedge C)
\end{equation}

Adding the bounds from (\ref{eq:to_prove_triang_3p}) and (\ref{eq:to_prove_triang_4}) totals exactly $f(T) + f(A \wedge C)$, confirming (\ref{eq:to_prove_triang}) and satisfying the triangle inequality.
\end{proof}

Keep in mind that a relatively complemented distributive lattice is not automatically a Boolean algebra, as it does not demand global bounds (like a universal top or bottom element). Familiar examples include the lattice of finite subsets of natural numbers (no top element) and co-finite subsets of natural numbers (no bottom element). Theorem \ref{th:super-log-sub} works because it restricts its focus entirely to the closed interval sublattice $[A \wedge B \wedge C, A \vee B \vee C]$, which operates locally as a bounded Boolean algebra.

Furthermore, Theorem \ref{th:super-log-sub} covers much more ground than previous results concerning the generalized Consonni and Todeschini similarity \cite{Badica2025}. Those older proofs were restricted to Boolean algebras, required a specific lower bound $\bot$ where $f(\bot)=0$, and demanded $f$ be differentiable, typically built as a convex function combined with a measure. Theorem \ref{th:super-log-sub} clears away these constraints, rendering those earlier results as straightforward corollaries to this wider theorem.

\section{Result for Submodular Valuations}
\label{sec:sub}

According to Theorem 4 in \cite{Kosub2019}, a modified Jaccard distance based on the symmetric difference handles the triangle inequality perfectly when evaluating sets with a submodular valuation. In this section, we pull this result out of basic Boolean lattices (sets) and stretch it to fit sectionally complemented distributive lattices. 

A main challenge arises because the symmetric difference is not natively defined across arbitrary lattices. However, in a sectionally complemented distributive lattice, you can consistently map out the symmetric difference using the relative complement operation.

\begin{proposition}\label{prop:set-diff-consistency} (Relative set difference works consistently in sectionally complemented distributive lattices)
For any four elements $A,B,X,Y$ in a sectionally complemented distributive lattice where $A \vee B \leq X$ and $A \vee B \leq Y$:
\begin{equation}
\label{eq:set-diff-consistency}
A \wedge B^\prime_X = A \wedge B^\prime_Y
\end{equation}
where $B^\prime_X$ is the relative complement of $B$ in the interval $[\bot, X]$.
\end{proposition}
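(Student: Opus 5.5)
The plan is to reduce both sides of (\ref{eq:set-diff-consistency}) to a common canonical element, namely $A \wedge B^\prime_{A\vee B}$. This only requires comparing the complement of $B$ in a large interval with its complement in a smaller one. Since $A \vee B \leq X$ and $A\vee B \leq Y$, it suffices to prove the following claim. \emph{If $B \leq W \leq X$ and $A \leq W$, then $A \wedge B^\prime_X = A \wedge B^\prime_W$.} Applying the claim with $W = A\vee B$ once for $X$ and once for $Y$ gives $A\wedge B^\prime_X = A\wedge B^\prime_{A\vee B} = A\wedge B^\prime_Y$, which is the statement.

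To prove the claim, I would first identify $B^\prime_W$ in terms of $B^\prime_X$. The natural candidate is $B^\prime_X \wedge W$. I will check that it lies in $[\bot, W]$ and is a complement of $B$ there. For the meet, $B \wedge (B^\prime_X \wedge W) = (B\wedge B^\prime_X)\wedge W = \bot \wedge W = \bot$. For the join, distributivity gives $B \vee (B^\prime_X \wedge W) = (B\vee B^\prime_X)\wedge (B \vee W) = X \wedge W = W$, where I use $B \leq W \leq X$. By uniqueness of complements in the distributive complemented lattice $[\bot,W]$ (Theorem 10 in \cite{Birkhoff1973}, as recalled in Section \ref{sec:background}), we conclude $B^\prime_W = B^\prime_X \wedge W$.

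It then follows that $A \wedge B^\prime_W = A \wedge B^\prime_X \wedge W = A \wedge B^\prime_X$, because $A \leq W$ means $A\wedge W = A$. This proves the claim, and the two applications described above finish the proof.

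The main obstacle is the join computation. It is the only place where distributivity is genuinely needed, and it is also where the hypothesis $B\le W\le X$ enters: we need it to get $B \vee W = W$ and $X\wedge W = W$. Everything else is bookkeeping. One must also be careful that uniqueness of complements is invoked inside the interval sublattice $[\bot,W]$, which is itself distributive and complemented because $L$ is sectionally complemented. It is not invoked in $L$ as a whole, which need not have a top element.
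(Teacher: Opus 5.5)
Your proof is correct and is essentially the paper's argument. Both check, with a single distributive join computation, that a suitable element is a complement of $B$ in the interval $[\bot, A\vee B]$, and both conclude by uniqueness of complements there. The only difference is cosmetic: you exhibit that complement as $B^\prime_X\wedge(A\vee B)$ and then meet with $A$, whereas the paper verifies directly that $A\wedge B^\prime_X$ is the complement. These are the same element, since $B^\prime_X\wedge(A\vee B)=(B^\prime_X\wedge A)\vee(B^\prime_X\wedge B)=A\wedge B^\prime_X$.
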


\begin{proof}
By definition, the relative complement dictates $B \wedge B^\prime_X = \bot$ and $B \vee B^\prime_X = X$. Evaluating the meet and join of $(A \wedge B^\prime_X)$ with $B$ gives:
\begin{equation}
\label{eq:set-diff-consistency-proof-1}
(A \wedge B^\prime_X) \wedge B = A \wedge (B^\prime_X \wedge B) = A \wedge \bot = \bot
\end{equation}
Applying the distributive law:
\begin{equation}
\label{eq:set-diff-consistency-proof-2}
(A \wedge B^\prime_X) \vee B = (A \vee B) \wedge (B^\prime_X \vee B) = (A \vee B) \wedge X = A \vee B
\end{equation}

These exact same equations hold true when swapping $X$ for $Y$. According to Theorem 10 in \cite{Birkhoff1973} (p. 12), complements in a distributive lattice are unequivocally unique. Because $A \wedge B^\prime_X$ and $A \wedge B^\prime_Y$ both produce $\bot$ during a meet with $B$, and $A \vee B$ during a join with $B$, they have to be the exact same unique element within the interval $[\bot, A \vee B]$. Thus, equality (\ref{eq:set-diff-consistency}) stands.
\end{proof}

Following Proposition \ref{prop:set-diff-consistency}, the evaluation of the lattice difference $A \setminus B$ is independent of the choice of the upper bound $X$, provided $X \geq A \vee B$. Consequently, both the lattice difference and the symmetric difference $A \Delta B$ can be consistently defined in any sectionally complemented distributive lattice using any such arbitrary upper bound $X$:
\begin{equation}\label{eq:set-sym-diff}
A \setminus B = A \wedge B^\prime_X \quad \text{and} \quad A \Delta B = (A \setminus B) \vee (B \setminus A)
\end{equation}

With this symmetric-difference operator well-defined, we formulate the alternative Jaccard distance for submodular valuations as a piecewise function to mathematically safeguard against division by zero when evaluating the bottom elements (since $f(\bot) = 0$):
\begin{equation}
\label{eq:sym-diff-jaccard-dist}
d_{f,\Delta}(A, B) = 
\begin{cases} 
0 & \text{if } A = B = \bot \\ 
\frac{f(A \Delta B)}{f(A \vee B)} & \text{otherwise} 
\end{cases}
\end{equation}

With this framework established, we introduce our next generalization.
\begin{theorem}\label{th:submodular-sym-diff}
Assume $L$ is a sectionally complemented distributive lattice, and $f$ is a positive, monotone, and submodular valuation function on $L$ where $f(\bot) = 0$. In this scenario, the distance $d_{f,\Delta}$ defined in (\ref{eq:sym-diff-jaccard-dist}) satisfies the triangle inequality.
\end{theorem}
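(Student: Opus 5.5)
The plan is to reduce everything to the finite Boolean algebra generated inside the interval $[\bot, T]$ with $T = A \vee B \vee C$. By sectional complementation and distributivity this interval is a Boolean algebra. By Proposition \ref{prop:set-diff-consistency}, every difference and symmetric difference among $A,B,C$ can be computed there using complements relative to $T$. I would then introduce the seven ``Venn regions'' generated by $A,B,C$ inside $[\bot,T]$. These are the atoms of the Boolean subalgebra generated by $A,B,C$, e.g. $A\wedge B^\prime_T\wedge C^\prime_T$ and $A\wedge B\wedge C$. They are pairwise disjoint (meet $\bot$), and each of $A\Delta B$, $A\vee B$, etc.\ is the join of a specific subset of them. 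This recreates exactly the setting of Kosub's Theorem 4 \cite{Kosub2019}. The main work is checking that his argument only uses monotonicity and submodularity of $f$, $f(\bot)=0$, and Boolean identities among joins of disjoint regions.

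The key steps, in order, are as follows.

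\textbf{(1) Degenerate cases.} If some pair is $\bot$, the claim is immediate. For example, if $A=B=\bot$ the left side is $0$. If $C=\bot$ but $A\neq\bot$, then $d_{f,\Delta}(A,C)=f(A)/f(A)=1$ whenever $f(A)>0$, and $d_{f,\Delta}\le 1$ always holds since $A\Delta B\le A\vee B$. Zero denominators with nonzero arguments have to be handled too. Since $f$ is only positive, I would either adopt the convention that $0/0=0$ or observe that monotonicity forces $f(A\Delta B)=0$ whenever $f(A\vee B)=0$.

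\textbf{(2) Lattice inclusion.} In the Boolean algebra $[\bot,T]$ establish
\[
A\Delta B \le (A\Delta C)\vee(C\Delta B).
\]

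\textbf{(3) Submodular splitting bound.} Use submodularity together with $f(\bot)=0$ and monotonicity to get
\[
f(A\Delta B)\le f(A\Delta C)+f(C\Delta B),
\]
which is subadditivity on joins. The needed meet term is at least $f(\bot)=0$ by monotonicity.

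\textbf{(4) Denominator comparison.} Handle the differing denominators as Kosub does. If $C\le A\vee B$, then $f(A\vee C),f(C\vee B)\le f(A\vee B)$, and step (3) finishes the argument. Otherwise, write $C=C_0\vee C_1$ with $C_0=C\wedge(A\vee B)$ and $C_1=C\setminus(A\vee B)$, the part of $C$ outside $A\vee B$.

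The main obstacle is step (4) in the case where $C$ sticks out of $A\vee B$. Adding $C_1$ to $C$ increases both $f(A\Delta C)$ and $f(A\vee C)$, and a ratio of the form $x/y$ with $x\le y$ can only decrease as both terms grow when the increase in the numerator is at least that in the denominator. Here the increase in the numerator is at most that in the denominator. So I would aim to prove
\[
\frac{f(A\Delta C)}{f(A\vee C)}\ \ge\ \frac{f(A\Delta C_0)+\delta}{f(A\vee C_0)+\delta},
\]
where the left side is compared against a quantity of this form with $\delta\ge0$ and $A\vee C_0\le A\vee B$. I would bound the actual increments via the unconditioned additive diminishing returns of Proposition \ref{prop:submodular}(i).

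The elementary fact $\frac{x+\delta}{y+\delta}\ge \frac{x}{y}$ for $0\le x\le y$ then reduces the problem to $C_0\le A\vee B$, i.e.\ to the previous case. If the increment bookkeeping fails, the fallback is to follow Kosub's original chain of inequalities verbatim. Each time it invokes a set identity, I would substitute the corresponding identity in the Boolean interval $[\bot,T]$, justified by Proposition \ref{prop:set-diff-consistency} and the uniqueness of complements.
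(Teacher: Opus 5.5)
Your proposal is correct. Steps (1)--(3) coincide with the paper's proof, which also passes to the Boolean interval $[\bot,T]$, uses the inclusion $A\Delta B\le (A\Delta C)\vee(C\Delta B)$, and gets $f(A\Delta B)\le f(A\Delta C)+f(C\Delta B)$ from submodularity, monotonicity and $f(\bot)=0$.

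The two routes diverge at the normalization step. The paper does not carry it out: it only asserts that the remaining algebra ``matches the set-theoretic steps'' of Kosub's Theorem~4, and leaves you to check that each set identity used there has a lattice counterpart. Your step (4) is instead a self-contained, lattice-native reduction. With $C_0=C\wedge(A\vee B)$ and $C_1=C\setminus(A\vee B)$, you show $d_{f,\Delta}(A,C)\ge d_{f,\Delta}(A,C_0)$ and $d_{f,\Delta}(C,B)\ge d_{f,\Delta}(C_0,B)$. That reduces everything to the case $C\le A\vee B$, where all denominators are bounded by $f(A\vee B)$ and step (3) finishes. This needs only Proposition~\ref{prop:submodular}(i), already proved for arbitrary lattices, plus two Boolean identities in $[\bot,T]$. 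It also isolates a useful monotonicity property of $d_{f,\Delta}$. Your remark on zero denominators is more careful than the paper. Its piecewise definition only covers $A=B=\bot$, yet $f(A\vee B)=0$ can occur with $A\vee B\ne\bot$ when $f$ is merely positive. The convention $0/0=0$ is what actually fixes this; the observation that $f(A\Delta B)=0$ alone does not define the quotient.

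There is one correction to your write-up of step (4). Both directions in your sentence about increments are reversed, and what you call the main obstacle is immediate. If the numerator gains at least as much as the denominator, a ratio $x/y$ with $0\le x\le y$ cannot decrease, and submodularity gives exactly that gain. Since $C_1\wedge(A\vee B)=\bot$, in $[\bot,T]$ you have $A\Delta C=(A\Delta C_0)\vee C_1$ and $A\vee C=(A\vee C_0)\vee C_1$. Proposition~\ref{prop:submodular}(i) with $X=A\Delta C_0\le Y=A\vee C_0$ and $Z=C_1$ then gives $f(A\Delta C)-f(A\Delta C_0)\ge f(A\vee C)-f(A\vee C_0)=:\delta\ge 0$. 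Hence $d_{f,\Delta}(A,C)\ge \frac{f(A\Delta C_0)+\delta}{f(A\vee C_0)+\delta}\ge d_{f,\Delta}(A,C_0)$, and the pair $(C,B)$ works the same way. No fallback to Kosub's chain is needed.
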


\begin{proof}
For any three elements $A, B, C \in L$, if $A = B = C = \bot$, the distance between any pair is $0$ by definition (\ref{eq:sym-diff-jaccard-dist}), and the triangle inequality $0 \leq 0 + 0$ holds trivially. 

For all other cases, let $T = A \vee B \vee C > \bot$. By definition, the closed interval sublattice $[\bot, T]$ operates as a Boolean algebra. 

Because $A, B, C$ all reside inside this Boolean algebra, standard Boolean structural identities naturally apply to them. Specifically, the core inclusion property for the symmetric difference stays intact:
\begin{equation}
\label{eq:sym-diff-inclusion}
A \Delta B \leq (A \Delta C) \vee (C \Delta B)
\end{equation}

Since $f$ is submodular and $f(\bot) = 0$, it is inherently subadditive, meaning $f(X \vee Y) \leq f(X) + f(Y)$ for any $X, Y \in L$. Pairing $f$'s monotonicity with (\ref{eq:sym-diff-inclusion}) and applying this subadditivity yields:
\begin{equation}
\label{eq:sym-diff-subadd}
f(A \Delta B) \leq f((A \Delta C) \vee (C \Delta B)) \leq f(A \Delta C) + f(C \Delta B)
\end{equation}

Equation (\ref{eq:sym-diff-subadd}) proves that the numerator $f(A \Delta B)$ functions as an unnormalized metric (handling the triangle inequality entirely on its own). From here, since $f$ is monotone and subadditive, the algebra required to prove the triangle inequality for the normalized fraction $d_{f,\Delta}(A, B)$ matches the set-theoretic steps outlined in Theorem 4 of \cite{Kosub2019}.
\end{proof}

A direct result of Theorem \ref{th:submodular-sym-diff} is realizing that a universal maximum element ($\top$) isn't mathematically necessary to keep the symmetric-difference Jaccard distance functioning as a metric. By leaning only on sectional complementation, the theorem shows that the localized Boolean structure of the interval $[\bot, A \vee B \vee C]$ carries all the algebra required to maintain the subadditivity of $f(A \Delta B)$. 

Furthermore, this finding exposes a hard structural divide regarding how the Jaccard distance must be generalized based on the valuation function. For modular and supermodular valuations, the standard formulation $1 - \frac{f(A \wedge B)}{f(A \vee B)}$ comfortably handles the triangle inequality. However, if the valuation is submodular, you must pivot to the symmetric-difference formulation $\frac{f(A \Delta B)}{f(A \vee B)}$. As Section \ref{sec:nec} will clarify, this pivot isn't a stylistic choice, but it's mathematically mandatory, because supermodularity is an explicit prerequisite for the standard formulation to act as a metric. Proposition \ref{prop:set-diff-consistency} simply ensures this required pivot to symmetric difference remains viable even in open-ended distributive lattices.

\section{Necessary Conditions for the Triangle Inequality}
\label{sec:nec}

In this section, we outline the necessary conditions a valuation function $f$ must meet for the generalized Jaccard distance $d_{f,J}$, defined by (\ref{eq:gen-jaccard-dist}), to function as a valid metric on an arbitrary lattice. This creates a straightforward test: if $f$ fails any of these conditions, the standard generalized Jaccard formula cannot possibly satisfy the triangle inequality.

\begin{theorem}\label{th:super-nec-cond}
If the distance measure $d_{f,J}$ generated by a strictly positive real valuation function $f$ on an arbitrary lattice satisfies the triangle inequality, it is mandatory that $f$ and $\frac{1}{f}$ are both supermodular, and that $f$ is monotone.
\end{theorem}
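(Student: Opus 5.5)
The plan is to derive each of the three necessary conditions by applying the assumed triangle inequality to a well-chosen triple of lattice elements. We use only that $d_{f,J}(A,A)=0$ for every $A$ (since $f(A)>0$) and that for comparable $X\leq Y$ one has $d_{f,J}(X,Y)=1-\frac{f(X)}{f(Y)}$. This follows from $X\wedge Y=X$ and $X\vee Y=Y$ by the connection lemma (\ref{eq:lattice-order}). No distributivity or complementation is needed, so the argument works on an arbitrary lattice, in the same spirit as the specialisations used in Proposition \ref{prop:gilbert-cond-mon-mod}.

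\textbf{Monotonicity.} Take $X\leq Y$ and apply the triangle inequality to the degenerate triple $X,Y,X$:
\[
0=d_{f,J}(X,X)\leq d_{f,J}(X,Y)+d_{f,J}(Y,X)=2\Bigl(1-\frac{f(X)}{f(Y)}\Bigr).
\]
Since $f(Y)>0$, this gives $f(X)\leq f(Y)$.

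\textbf{Supermodularity of $f$.} For arbitrary $X,Y$, route the path from $X$ to $Y$ through the join $X\vee Y$:
\[
d_{f,J}(X,Y)\leq d_{f,J}(X,X\vee Y)+d_{f,J}(X\vee Y,Y).
\]
Both right-hand distances involve comparable pairs, so the inequality reads
\[
1-\frac{f(X\wedge Y)}{f(X\vee Y)}\leq 2-\frac{f(X)}{f(X\vee Y)}-\frac{f(Y)}{f(X\vee Y)}.
\]
Multiplying by $f(X\vee Y)>0$ and rearranging yields exactly (\ref{eq:supermodular}).

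\textbf{Supermodularity of $\frac{1}{f}$.} Route through the meet $X\wedge Y$ instead:
\[
1-\frac{f(X\wedge Y)}{f(X\vee Y)}\leq 2-\frac{f(X\wedge Y)}{f(X)}-\frac{f(X\wedge Y)}{f(Y)}.
\]
Dividing by $f(X\wedge Y)>0$ gives
\[
\frac{1}{f(X)}+\frac{1}{f(Y)}\leq \frac{1}{f(X\vee Y)}+\frac{1}{f(X\wedge Y)},
\]
which is the supermodularity of $\frac{1}{f}$.

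The only step needing real insight is choosing the intermediate points: the join gives the condition on $f$, and the meet gives the condition on $\frac{1}{f}$. Once these are chosen, the algebra is routine. The one thing to watch is that every division is by a strictly positive quantity, which the strict positivity hypothesis guarantees.
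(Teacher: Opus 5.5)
Your proposal is correct and follows the paper's proof essentially step for step. The paper also uses $A \vee C$ as the intermediate point to obtain supermodularity of $f$, $A \wedge C$ to obtain supermodularity of $\frac{1}{f}$, and the degenerate triple $A, B, A$ with $A \leq B$ to obtain monotonicity, with strict positivity serving only to justify the divisions.
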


\begin{proof}
By definition, the triangle inequality directly equates to inequality (\ref{eq:triangle_ineq_equiv}):
$$1+ \frac{f(A \wedge C)}{f(A \vee C)} \geq \frac{f(A \wedge B)}{f(A \vee B)} + \frac{f(B \wedge C)}{f(B \vee C)} \quad \forall A,B,C \in L$$

To begin, substitute $B = A \vee C$. Applying the absorption law, $A \wedge (A \vee C) = A$ and $C \wedge (A \vee C) = C$. The inequality trims down to:
\begin{equation}
\label{eq:triangle_ineq_equiv_1}
1+ \frac{f(A \wedge C)}{f(A \vee C)} \geq \frac{f(A)}{f(A \vee C)} + \frac{f(C)}{f(A \vee C)}
\end{equation}
Multiplying both sides by the strictly positive $f(A \vee C)$ lands exactly on the definition of supermodularity (\ref{eq:supermodular}). Therefore, $f$ must be supermodular.

Next, set $B = A \wedge C$. By the absorption law, $A \vee (A \wedge C) = A$ and $C \vee (A \wedge C) = C$. The inequality then shrinks to:
\begin{equation}
\label{eq:triangle_ineq_equiv_2}
1+ \frac{f(A \wedge C)}{f(A \vee C)} \geq \frac{f(A \wedge C)}{f(A)} + \frac{f(A \wedge C)}{f(C)}
\end{equation}
Dividing both sides by the strictly positive $f(A \wedge C)$ yields the supermodularity condition for the reciprocal function $\frac{1}{f}$. Consequently, $\frac{1}{f}$ must also be supermodular.

Finally, to verify monotonicity, take two elements $A$ and $B$ where $A \leq B$. Substitute $C = A$ into the target inequality. Since $A \leq B$, we know $A \wedge B = A$ and $A \vee B = B$. The inequality reduces to:
\begin{equation}
\label{eq:triangle_ineq_equiv_3}
1+ \frac{f(A)}{f(A)} \geq \frac{f(A)}{f(B)} + \frac{f(A)}{f(B)} \implies 2 \geq \frac{2f(A)}{f(B)}
\end{equation}
Multiplying by the strictly positive $f(B)$ and dividing by 2 leaves $f(B) \geq f(A)$, satisfying the definition of monotonicity (\ref{eq:monotone}).
\end{proof}

It is helpful to point out that demanding both $f$ and $\frac{1}{f}$ be supermodular is actually a softer mathematical constraint than demanding $f$ be both supermodular and $\log$-submodular. Specifically, it is easy to prove that:
\begin{itemize}
    \item[i)] If $f$ is supermodular and $\log$-submodular, then $\frac{1}{f}$ is definitely supermodular.
    \item[ii)] Conversely, if $\frac{1}{f}$ is supermodular and $\log$-submodular, then $f$ is supermodular.
\end{itemize}

Theorem \ref{th:super-nec-cond} provides a quick filtering tool for new distance measures. If a proposed real, positive valuation function $f$ isn't monotone, isn't supermodular, or if its reciprocal $\frac{1}{f}$ fails supermodularity, you can immediately rule out $d_{f,J}$ as a true metric.

Stepping back, the results from Theorem \ref{th:super-log-sub} and Theorem \ref{th:super-nec-cond} highlight a deep theoretical link between a valuation function's supermodularity and the generalized Jaccard distance's metricity, firmly marking the structural boundaries for applying this metric to arbitrary data spaces. 

\section{Practical Implications and Application Scenarios}
\label{sec:applications}

While Sections \ref{sec:modular} through \ref{sec:nec} focus on structural proofs, they directly resolve strict algebraic limitations in applied computer science and information theory. By eliminating the requirement for Boolean distributivity and global lattice bounds, these results allow the generalized Jaccard distance to be computed safely on non-classical data structures. Below, we map these theoretical relaxations to specific computational domains and active research problems.

\subsection{Modular Valuations in Quantum Information Theory}

Classical information processing relies on Boolean algebra, which strictly requires a distributive lattice. Quantum mechanics, however, inherently violates classical distributive logic. As established by Birkhoff and von Neumann \cite{Birkhoff1936}, the logical propositions of a quantum system---modeled as closed subspaces of a Hilbert space---form an orthomodular lattice, which is strictly non-distributive. 

Because previous metricity proofs for the generalized Jaccard distance \cite{Kosub2019} explicitly utilized distributivity, they cannot be applied to quantum logic. Theorem \ref{th:mod} resolves this limitation. In quantum information theory, a fundamental task is quantifying the distinguishability between two quantum states. Defining metric spaces over quantum effect algebras and orthomodular structures remains a highly active area of research \cite{Mishra2025}. If we define the valuation function $f$ as the trace of a quantum projection operator (or as a quantum probability measure via Gleason's Theorem \cite{Gleason1957}), $f$ operates as a strictly positive, monotone, and modular measure. By proving that distributivity is mathematically unnecessary under modular valuations, Theorem \ref{th:mod} guarantees that the standard generalized Jaccard distance ($1 - \frac{\text{Tr}(P \wedge Q)}{\text{Tr}(P \vee Q)}$) provides a rigorously valid metric for comparing quantum events within their native, non-distributive algebraic structure.

\subsection{Supermodular Valuations in Conceptual Hierarchies and Fuzzy Measures}

Our findings also extend the generalized Jaccard metric to supermodular and $\log$-submodular valuation functions on relatively complemented distributive lattices. The primary computational advantage of this relaxation is the complete removal of the need for a universal top ($\top$) or bottom ($\bot$) element.

This result directly optimizes distance computations in Formal Concept Analysis \cite{Ganter2024} and systems utilizing fuzzy sets \cite{Zadeh1965}. In standard applications, intersection (minimum) and union (maximum) operations form distributive lattices. However, in distributed databases, a global top element is typically undefined; computing a universal maximum across an open-ended universe is algorithmically intractable and unnecessary for localized queries. 

Because relatively complemented lattices ensure that any closed interval $[U, V]$ operates locally as a Boolean algebra, Theorem \ref{th:super-log-sub} allows clustering algorithms to execute safely within defined local boundaries. Furthermore, in fuzzy measure theory, supermodular valuation functions are standard tools used to model synergistic interactions, where the combined evaluation of two features strictly exceeds the sum of their independent evaluations \cite{Grabisch1996}. Theorem \ref{th:super-log-sub} ensures that similarity algorithms utilizing these synergistic measures remain geometrically consistent without requiring global data bounds. Simultaneously, the required $\log$-submodularity constraint mathematically dampens this growth, ensuring that while the absolute valuation exhibits supermodular scaling, the relative proportions remain strictly bounded (Proposition \ref{prop:submodular}). This prevents the normalized distance fractions from distorting the metric space.

\subsection{Submodular Valuations in Temporal Databases and Data Streams}

For submodular valuation functions, Theorem \ref{th:submodular-sym-diff} establishes that the symmetric-difference Jaccard distance satisfies the triangle inequality on sectionally complemented distributive lattices. This directly solves a geometric stability problem for machine learning algorithms processing continuous, dynamic data.

In algorithm design, submodularity is the standard mathematical framework for modeling diminishing returns. It continuously drives optimization models for feature selection, sensor placement, and influence maximization \cite{Kempe2003, Krause2008}, and it is actively utilized in high-volume workloads to maximize diversity using the Jaccard distance \cite{MaxSumDiv2026}.

This valuation profile naturally characterizes temporal databases and continuous Internet of Things (IoT) data streams \cite{Snodgrass1999}. Because time advances continuously, the underlying data structure possesses no fixed global upper bound (no $\top$ element), though it originates from a defined initialization state ($\bot$). Sectionally complemented distributive lattices accurately map this topology. In stream mining, algorithms typically evaluate data using sliding windows or localized historical sequences \cite{Aggarwal2007}. Establishing this time window defines a temporary upper bound $X$, which guarantees the interval $[\bot, X]$ remains structurally consistent. Furthermore, the symmetric difference $A \Delta B$ is the precise mathematical operator required to quantify the transition (the delta) between two sliding windows. By proving that the symmetric-difference metric is robust within these local sectional intervals, Theorem \ref{th:submodular-sym-diff} ensures that distance-based machine learning algorithms can process infinite streams indefinitely without violating the triangle inequality.

\section{Discussion of Open Theoretical Problems}\label{sec:open-problems}

While the generalizations detailed in this paper expand how and where the Jaccard distance can be used, they also define hard structural limits. Going forward, there are four key open mathematical problems left to tackle:

\begin{itemize}
    \item \textbf{Metricity in non-modular lattices:} We proved that a modular valuation yields a valid metric on an arbitrary lattice without leaning on lattice modularity axioms. However, Rutherford \cite{Rutherford1965} notes that any lattice supporting a monotone and modular valuation is categorized as a metric lattice, and all metric lattices are modular. This means a modular valuation simply cannot exist on a non-modular lattice (like the pentagon lattice $N_5$). Building a generalized Jaccard metric for non-modular structures will therefore require supermodular or submodular valuations, assuming the distributivity requirements of those valuations can be successfully relaxed in future studies.
    
    \item \textbf{Arbitrary lattices for supermodular valuations:} While we established metricity for supermodular and $\log$-submodular valuations, the proof relies on the lattice being both relatively complemented and distributive. It is still unknown whether the standard generalized Jaccard formula holds up to the triangle inequality for these valuations on arbitrary lattices that lack those two specific traits.
    
    \item \textbf{Arbitrary lattices for submodular valuations:} Similarly, we proved metricity for the symmetric-difference Jaccard formula under submodular valuations by leveraging sectionally complemented distributive lattices. Whether this symmetric-difference metric remains valid on arbitrary lattices missing these structural properties is still an open question.
    
    \item \textbf{The sufficiency of necessary conditions:} We proved that if the standard generalized Jaccard formula is a metric, $f$ and $\frac{1}{f}$ must absolutely be supermodular. The reverse of this statement remains untested: if $f$ and $\frac{1}{f}$ are supermodular, is that enough on its own to guarantee the triangle inequality? Also, it is unclear if the stricter $\log$-submodularity constraint used in Theorem \ref{th:super-log-sub} is actually required, or if it can be loosened to match the more basic necessary conditions established later in the paper.
\end{itemize}

\section{Conclusions}
\label{sec:concl}

This paper generalized the Jaccard distance to evaluate similarities between elements across arbitrary lattices. By carefully stripping away the Boolean and distributive axioms that older literature relied on, we laid out new sufficient and necessary conditions for the generalized Jaccard distance to hold up to the triangle inequality. 

We showed that distributivity is mathematically unnecessary when dealing with a modular valuation function, proving that strictly positive, monotone, and modular valuations yield a valid metric on completely arbitrary lattices. For supermodular and $\log$-submodular valuations, we verified the triangle inequality holds in relatively complemented distributive lattices, removing the long-standing need for global lattice bounds. We also adapted the symmetric-difference Jaccard formula for submodular valuations, confirming its metricity in sectionally complemented distributive lattices. Finally, we proved that supermodularity and monotonicity are explicit, baseline requirements for the standard generalized Jaccard formula to function as a metric at all.

Beyond the math, relaxing these structural axioms offers immediate practical value. Stripping away the distributivity requirement allows this generalized metric to work seamlessly on the non-distributive orthomodular lattices common in quantum information theory. Similarly, dropping the need for global lattice bounds lets these distance metrics run natively on fuzzy sets, formal concept hierarchies, and infinite temporal or IoT data streams.

Moving forward, research will likely split in two useful directions. Theoretically, it is important to figure out if the distributivity requirements for supermodular and submodular valuations can be completely removed, whether the necessary and sufficient conditions can be perfectly unified, and how generalized Jaccard metrics might be adjusted for strictly non-modular structures. On the applied side, future work should focus on plugging these generalized distance metrics into machine learning algorithms, like $k$-medoids and conceptual clustering, to benchmark their practical performance and computational speed on non-classical data structures and endless data streams.

\bibliographystyle{alpha} 
\bibliography{sample}

\end{document}